\theoremstyle{definition}
\newtheorem{definition}{Definition}
\theoremstyle{theorem}
\newtheorem{theorem}{Theorem}
\def\eqref#1{(\ref{#1})}
\newcommand{\femb}[1]{\ensuremath{\mathbf{d}_{#1}}}
\newcommand{\feat}[2]{\ensuremath{\mathbf{f}_{#1#2}}}
\newcommand{\emb}[1]{\ensuremath{\mathbf{x}_{#1}}}
\newcommand{\con}[1]{\ensuremath{\mathbf{y}_{#1}}}
\newcommand{\seine}{sign2vec}
\newcommand{\sine}{SiNE}
\newcommand{\sam}[1]{\ensuremath{\mathbf{\eta}_{#1}}}
\newcommand{\samp}[1]{\ensuremath{\mathbf{\eta}_{#1}^{+}}}
\newcommand{\samn}[1]{\ensuremath{\mathbf{\eta}_{#1}^{-}}}
\begin{document}
%
\title{Distributed Representations of Signed Networks}
\author{Mohammad Raihanul Islam,
B. Aditya Prakash, Naren Ramakrishnan\\
Discovery Analytics Center, Department of Computer Science, Virginia Tech\\
Email: \texttt{\{raihan8, badityap, naren\}@cs.vt.edu}
}
\maketitle

\begin{abstract}
Recent successes in word embedding and document embedding have motivated researchers to explore similar representations for networks and to use such representations for tasks such as edge prediction, node label prediction, and community detection. Existing methods are largely focused on finding distributed representations for unsigned networks and are unable to discover embeddings that respect polarities inherent in edges. We propose \seine{}, a fast scalable embedding method suitable for signed networks. Our proposed objective function aims to carefully model the social structure implicit in signed networks by reinforcing the principles of social balance theory. Our method builds upon the traditional word2vec family of embedding approaches but we propose a new targeted node sampling strategy to maintain structural balance in higher-order neighborhoods. We demonstrate the superiority of \seine{} over state-of-the-art methods proposed for both signed and unsigned networks on several real world datasets from different domains. In particular, \seine{} offers an approach to generate a richer vocabulary of features of signed networks to support representation and reasoning.
\end{abstract}
\section{Introduction}\label{sec:intro}
Social and information networks are ubiquitous today across a variety
of domains; as a result, a large
body of research has been developed to help construct
discriminative and informative features for network analysis tasks such as
classification~\cite{node}, prediction~\cite{link}, visualization~\cite{tsne}, 
and entity recommendation~\cite{recom}.



Classical approaches to find features and embeddings are
motivated by dimensionality reduction research and extensions, e.g.,
approaches such as 
Laplacian eigenmaps~\cite{lle}, non-linear 
dimension reduction~\cite{isomap,nonlin}, and spectral embedding~\cite{specG,specE}. 
More recent research has focused on developing network analogies to
distributed vector representations such as word2vec~\cite{w2v1,w2v2}.
In particular, by viewing sequences of nodes encountered on random walks as
documents, methods such as 
DeepWalk~\cite{deepwalk}, node2vec~\cite{node2vec}, and LINE~\cite{line}
learn similar representations for nodes (viewing them as words).

Although these approaches are scalable to large networks, they are
primarily applicable to only unsigned networks. 
Signed networks are becoming increasingly important in
online media, trust management, and in law/criminal applications.
As we will show, applying
the above methods to signed networks results in key information loss in the
resulting embedding. For instance, if the sign between two nodes is negative,
the resulting embeddings could place the nodes in close proximity, which
is undesirable.

A recent attempt to fill this gap is the work of~\citeauthor{sine}
wherein the authors learn node representations by optimizing an 
objective function through a multi-layer neutral network based on structural 
balance theory. This work, however, models only local connectivity information
through 2-hop paths and fails to capture
global balance structures prevalent in a network.
Our contributions are:
\begin{enumerate}[leftmargin=0cm,itemindent=0.5cm,labelwidth=\itemindent,labelsep=0cm,align=left,nosep]
\item We propose \seine{}, a scalable node embedding method for feature learning in signed networks that maintains structural balance in higher 
order neighborhoods. \seine{} is very generic by design, and
can handle both directed and undirected networks,
including weighted or unweighted (binary) edges.
\item We propose a novel node sampling method as an improvement over traditional negative sampling. The idea is
to keep a cache of nodes during optimization integral for maintaining the principles of structural balance in the network. This targeted node sampling can be treated as an extension of the negative sampling used in word2vec models.
\item Through extensive experimentation, we demonstrate that \seine{} generates better features suitable for a range of prediction tasks such as edge and node label prediction. \seine{} is able to scalably
generate embeddings for networks with millions of nodes.
\end{enumerate}

\vspace{-1mm}
\section{Problem Formulation}\label{sec:prob}
\vspace{-0.5mm}
\begin{definition}{\textit{Signed Network:}}
A signed network can be defined as $G=(V,E)$, where $V$ is the set of vertices and $E$ is the set of edges between the vertices. Each element $v_i$ of $V$ represents an entity in the network and each edge $e_{ij}\in E$ is a tuple ($v_i$, $v_j$) associated with a weight $w_{ij}\in \mathbb{Z}$. The absolute value of $w_{ij}$ represents the strength of the
relationship between $v_i$ and $v_j$, whereas the sign represents the nature of relationship (e.g., friendship or antagonism). A signed network can be either directed or undirected. If $G$ is undirected then the order of vertices is not relevant (i.e. $(v_i,v_j)\equiv(v_j,v_i)$). On the other hand, if $G$ is directed then order becomes relevant (i.e. $(v_i,v_j)\not\equiv(v_j,v_i)$ and $w_{ij}\neq w_{ji}$)).
\end{definition}\vspace{-1mm}

\begin{figure}[!hbtp]
\centering
\includegraphics[scale=0.4]{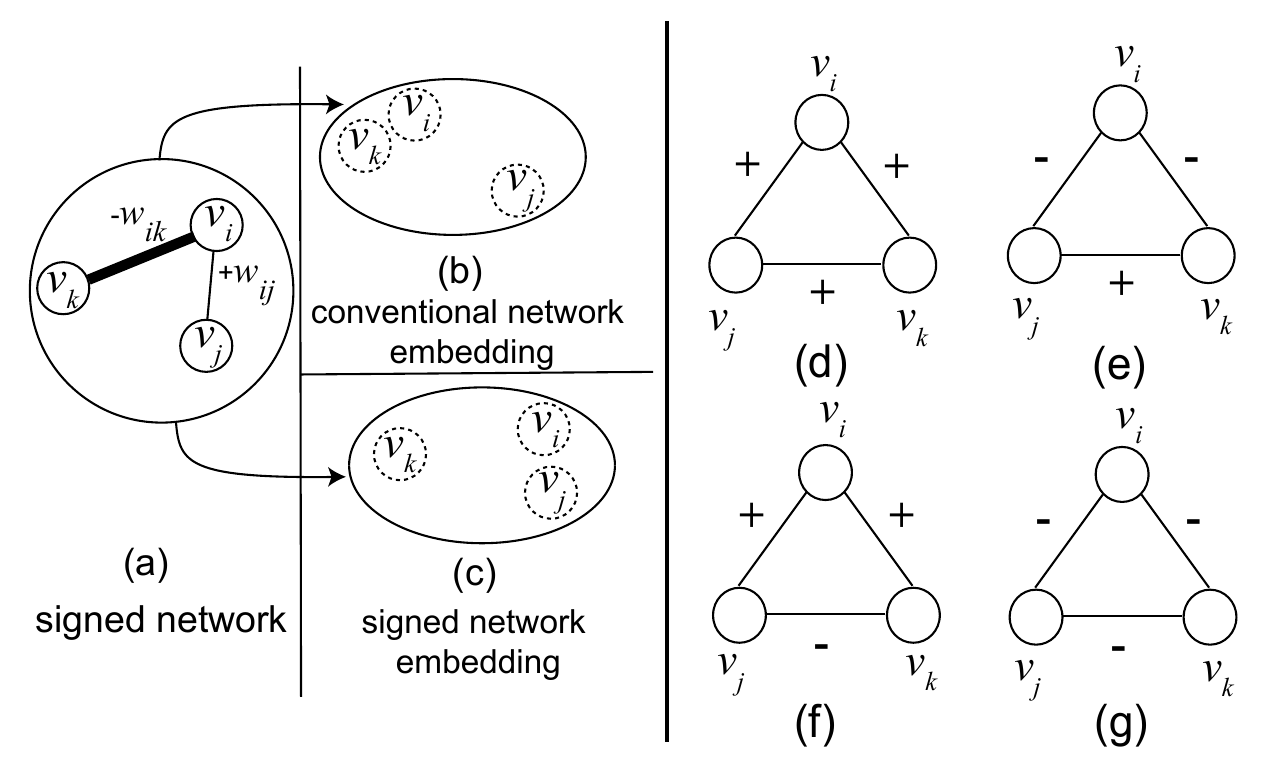}
\caption{\textit{Given a signed network (a), a conventional network embedding (b)
does not take signs into account and can result in faulty representations.
(c) \seine{} learns embeddings that respect sign information between edges. Of the possible signed triangles, (d) and (e) are considered balanced but (f) and (g) are not.}}
\label{fig:embed}
\vspace{-1mm}
\end{figure}
Because the weights in a signed network carry a combined
interpretation (sign denotes polarity and magnitude denotes
strength),
conventional proximity assumptions used in unsigned network
representations (e.g., in~\cite{node2vec}) cannot
be applied for signed networks. 
Consider a network wherein
the nodes $v_i$ and $v_j$ are positively connected and the
nodes $v_k$ and $v_i$ are negatively connected (see Fig.~\ref{fig:embed}(a)). 
Suppose the weights of the edges $e_{ij}$ and $e_{ik}$ are $+w_{ij}$ and $-w_{ik}$ respectively. Now if $|+w_{ij}| < |-w_{ik}|$, conventional embedding methods will place $v_i$ and $v_k$ closer than $v_i$ and $v_j$ owing to the
stronger influence of the weight (Fig. \ref{fig:embed}(b)). Even if considering the weight of negative edge as zero does not resolve it, because even though it may put node $v_i$ and $v_j$ closer, node $v_k$ may be relatively closer to $v_i$ because of ignoring the adverse relation between node $v_i$ and $v_k$. This may comprise the quality of embedding space. Ideally, we would like a representation
wherein nodes $v_i$ and $v_j$ are closer than nodes $v_i$ and $v_k$, as shown in Fig. \ref{fig:embed}(c). 
This example shows that modeling the polarity is as important as modeling the strength of the relationship.


To accurately model the interplay between the vertices in signed networks we use the theory of
\textit{structural balance} proposed by~\citeauthor{stbal1}. Structural balance theory posits that triangles with an odd number of positive edges are more plausible than an even positive edges (see Fig.~\ref{fig:embed}). 
Although different adaptation and alternative of balance theory exist in the literature, here we focus primarily on the original notion of structural balance to create the embedding space because it is useful in many scenarios like signed networks constructed from adjectives (described in Section \ref{sec:exp}).

\textbf{Problem Statement: }{\textit{Scalable Embedding of Signed Networks (\seine{}):}}
Given a signed network $G$, compute a low-dimensional vector $\femb{i}\in \mathbb{R}^K,\,\forall v_i\in V$, where positively related vertices reside in close proximity and negatively related vertices are distant.

\section{Scalable Embedding of Signed Networks (\seine{})}\label{sec:des}
\subsection{\seine{} for undirected networks}
Consider a weighted signed network defined as in section \ref{sec:prob}. 
Now suppose each $v_i$ is represented by a vector $\mathbf{x}_i\in \mathbb{R}^K$. Then a natural way to compute the proximity between $v_i$ and $v_j$ is by the following function (ignoring the sign for now):
\begin{equation}\label{eqn:p1}
p_{u}(v_i,v_j) = \sigma(\emb{j}^T \cdot \emb{i}) = \dfrac{1}{1 + \exp(-\emb{j}^T\cdot \emb{i})}
\end{equation} 
where $\sigma(a) = \frac{1}{1 + \exp(-a)}$. 
Now let us breakdown the weight of edge $w_{ij}$ into two components:
$r_{ij}$ and $s_{ij}$. $r_{ij}\in \mathbb{N}$ represents the absolute value of $w_{ij}$ (i.e. $r_{ij}=|w_{ij}|$) and $s_{ij}\in \{ -1,1\}$ represents the sign of $w_{ij}$. Given this breakdown of $w_{ij}$, $p_{u}(v_i,v_j) = \sigma(s_{ij} (\emb{j}^T\cdot \emb{i}))$. Now  incorporating the weight information, the objective function for undirected signed network can be written as:
\begin{equation}\label{eqn:obj1}
\mathcal{O}_{un} = \sum_{e_{ij}\in E} r_{ij} \sigma(s_{ij} (\emb{j}^T\cdot \emb{i}))= \sum_{e_{ij}\in E}r_{ij} p_{u}(v_i,v_j)
\end{equation}
By maximizing Eqn. \ref{eqn:obj1} we obtain a vector $\emb{i}$ of dimension $K$ for each node $v_i\in V$ (we also use
$\femb{i}$ to refer this embedding, for reasons that will become
clear in the next section).
\subsection{\seine{} for directed networks}
Computing embeddings for directed networks is trickier due to the
asymmetric nature of neighborhoods (and thus, contexts). For instance,
if the edge $e_{ij}$ is positive, but $e_{ji}$ is negative, it is not clear
if the respective representations for nodes $v_i$ and $v_j$ should be 
proximal or not. We solve this problem by treating each vertex as itself plus a specific context; for instance,
a positive edge $e_{ij}$ is interpreted to mean
that given the context of node $v_j$, node $v_i$ wants to be closer. This enables
us to treat all nodes consistently without worrying about reciprocity
relationships. To this end, we introduce
another vector $\con{i}\in\mathbb{R}^K$ besides $\emb{i}$, $\forall v_i\in V$. For a directed edge $e_{ij}$ the probability of context $v_j$ given $v_i$ is:
\begin{equation}\label{eqn:p2}
p_{d}(v_j|v_i)=\dfrac{\exp (s_{ij} (\con{j}^T\cdot \emb{i}))}{\sum_{k=1}^{|V|} \exp( s_{ik}(\con{k}^T \cdot \emb{i}))}
\end{equation}
Treating the same entity as itself and as a specific context is very popular in the text representation literature~\cite{w2v1}. The above equation defines a probability distribution over all context space w.r.t. node $v_i$. Now our goal is to optimize the above objective function for all the edges in the network. However we also need to consider the weight of each edge in the optimization. Incorporating the absolute weight of each edge we obtain the objective function for a directed network as:
\begin{equation}\label{eqn:obj2}
 \mathcal{O}_{dir} = \sum_{e_{ij}\in E} r_{ij}p_{d}(v_j|v_i)
 \end{equation} 
By maximizing Eqn.~\ref{eqn:obj2} we will obtain two vectors $\emb{i}$ and $\con{i}$ for each $v_i\in V$. The vector $\emb{i}$ models the outward connection of a node whereas $\con{i}$ models the inward connection of the node. Therefore the concatenation of $\emb{i}$ and $\con{i}$ represents the final embedding for each node. We denote the final embedding of node $v_i$ as $\femb{i}$. It should be noted that for undirected network $\femb{i}=\emb{i}$ whereas for a
directed network $\femb{i}$ is the concatenation of $\emb{i}$ and $\con{i}$. This means $|\emb{i}|=|\con{i}|=\frac{K}{2}$ in the case of directed graph (for the same representational length).
\subsection{Efficient Optimization by Targeted Node Sampling}\label{sec:samp}
The denominator of Eqn. \ref{eqn:p2} is very hard to compute as we have to marginalize the conditional probability over the 
entire vertex set $V$. We adopt 
the classical negative
sampling approach~\cite{w2v3} wherein
negative examples are selected from some distribution for 
each edge $e_{ij}$. 
However, for signed network conventional negative sampling does not work. For example consider the network from Fig.~\ref{fig:negs}(a). 
Viewing this example
as an unsigned network, while optimizing for edge $e_{ij}$,
we will consider $v_i$ and $v_y$ as negative examples and thus they will
be placed distantly from each other. However, in a signed network
context, $v_i$ and $v_y$ have a friendlier relationship (than with, say, $v_x$)
and thus should be placed closer to each other. 
We propose a new sampling approach, referred to as simply
\textit{targeted node sampling} wherein we
first create a cache of nodes for each node with their estimated relationship according to structural  balance theory and then sample nodes accordingly.

\begin{figure}[!t]
  \centering
  \begin{tabular}{@{}c@{}c@{}}
  \includegraphics[scale=0.3]{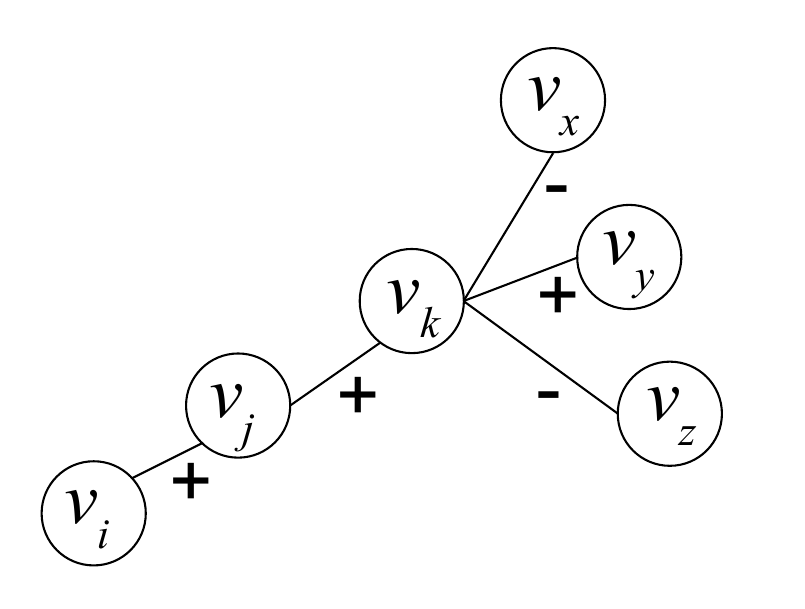} &
  \includegraphics[scale=0.25]{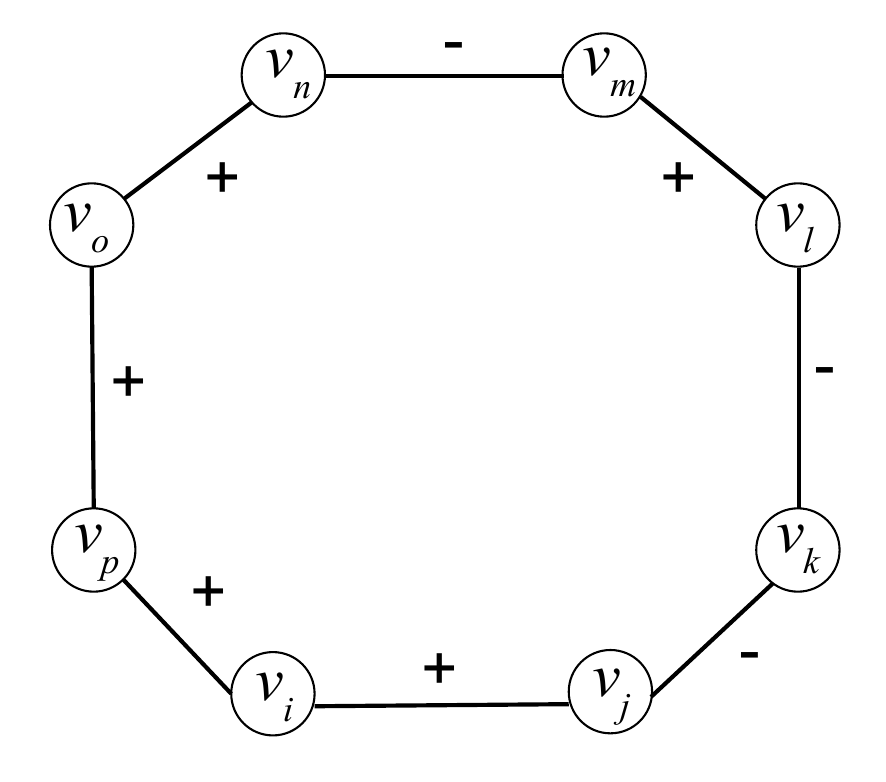} \\
  (a) & (b)
  \end{tabular}
  \caption{(\textit{a) depicts a small network to illustrate
why conventional negative 
sampling does not work. $v_i$ and $v_y$ might be considered
too distant for their representations to be placed close to each other.
Targeted node sampling solves this problem
by constructing a cache of nodes which can be used as sampling. (b) shows how we resolve \textit{conflict}. Although there are two ways to proceed 
from node $v_i$ to $v_l$ the shortest path is $v_i,v_j,v_k,v_l$, which estimates a net
positive relation between $v_i$ and $v_l$. As a result $v_l$ will be added to $\samp{i}$. However for node $v_m$ there are two shortest paths from $v_i$, with the path $v_i,v_p,v_o,v_n,v_m$ having more positive edges but
with a net negative relation,
so $v_m$ will be added to $\samn{i}$ in case of a \textit{conflict}.}}
\label{fig:negs}
\end{figure} 
\subsubsection{Constructing the cache for each node:}\label{sec:cache}
We aim to construct a cache of positive and negative examples for each node $v_i$ where the positive (negative) example cache $\samp{i}$ ($\samn{i}$) contains nodes which  should have a positive (negative) relationship with $v_i$ according to structural balance theory. 
To construct these caches for each node $v_i$, we apply random walks of length $l$ starting with $v_i$ to obtain a sequence of nodes. Suppose the sequence is $\Omega=<v_i, v_{n_0}, \cdots, v_{n_{l-1}}>$. Now we add each node $v_{n_p}$ to either $\samp{i}$ or $\samn{i}$ by observing the estimated sign between $v_i$ and $v_{n_p}$. The estimated sign is computed using the following recursive formula: 
\begin{equation}\label{eqn:sign}
\tilde{s}_{i{n_p}} = \tilde{s}_{in_{p-1}} \times s_{n_{p-1}n_p}
\end{equation} 
Here $\tilde{s}_{in_{p-1}}$ is the estimated sign between node $v_i$ and node $v_{n_{p-1}}$, which can be computed recursively. The base case for this formula is $\tilde{s}_{i{n_1}} = s_{i{n_0}}\times s_{{n_0}{n_1}}$. 
If node $v_{n_p}$ is not a neighbor of node $v_i$ and $\tilde{s}_{i{n_p}}$ is positive then we add $v_{n_p}$ to $\samp{i}$. On the other hand if $\tilde{s}_{i{n_p}}$ is negative and $v_{n_p}$ is not a neighbor of $v_i$ then we add it to $\samn{i}$.  For example for the graph shown in Fig. \ref{fig:negs}(a), suppose a random walk starting with node $v_i$ is $<v_i, v_j, v_k, v_z>$. Here node $v_k$ will be added to $\samp{i}$ because $\tilde{s}_{ik}=s_{ij}\times s_{jk}>0$ (base case) and $v_k$ is not a neighbor of $v_i$. However, $v_z$ will be added to node $\samn{i}$ since $\tilde{s}_{iz}=\tilde{s}_{ik}\times s_{kz}<0$ and $v_z$ is not a neighbor of $v_i$. 

The one problem with this approach is that 
a node $v_j$ may be added to both $\samp{i}$ and $\samn{i}$. We denote this phenomena as \textit{conflict} and define the reason for this \textit{conflict} in Theorem~\ref{th:un}. We resolve this situation
by computing the shortest path between $v_i$ and $v_j$ and compute $\tilde{s}_{ij}$ between them using the shortest path, then add to either $\samp{i}$ or $\samn{i}$ based on $\tilde{s}_{ij}$. To compute the shortest path we have to consider the network as unsigned since negative weight has a different interpretation for shortest path algorithms. We also prove that if there are multiple shortest paths with equal length in case of a \textit{conflict}, then only one path has the highest number of positive edges. We pick this path to compute $\tilde{s}_{ij}$. Both proofs are described in the supplementary section. A scenario is shown in Fig.~\ref{fig:negs}(b). 
\begin{theorem}\label{th:un}
(Reason of conflict): Node $v_j$ will be added to both $\samp{i}$ and $\samn{i}$ if there are multiple paths from $v_i$ to $v_j$ and the union of these paths has at least one unbalanced cycle.  
\end{theorem}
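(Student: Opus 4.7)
The plan is to argue the implication by contrapositive: assume no conflict arises for $v_j$, so every path $P_1,\ldots,P_m$ from $v_i$ to $v_j$ drawn during the cache construction produces the same cumulative sign $s_0=\prod_{e\in P_k}s_e$, and deduce that their edge-union $U=\bigcup_k E(P_k)$ cannot contain an unbalanced cycle. This is the cleaner direction to attack because the uniform-sign hypothesis on the $P_k$'s can be propagated to a global statement on $U$ via the cycle-space algebra of signed graphs.

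First I would invoke the basic identity that for any two $v_i$-to-$v_j$ paths $P_a,P_b$, the symmetric difference $P_a\oplus P_b$ is a disjoint union of cycles lying in $U$, and the product of $s_e$ along its edges equals $\sigma(P_a)\,\sigma(P_b)$ after the shared edges cancel in pairs. Under the contrapositive hypothesis this product is $(+1)(+1)=+1$, so every pairwise symmetric difference contains an even number of unbalanced cycles. Letting $S$ be the subspace of the cycle space $Z(U)$ generated over $\mathbb{Z}/2$ by all such $P_a\oplus P_b$, the induced sign homomorphism from $Z(U)$ to $\{\pm 1\}$ vanishes identically on $S$.

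The crux is then the equality $S=Z(U)$. I would fix a spanning tree $T$ of $U$ and, for each non-tree edge $e=(u,w)$, exhibit the fundamental cycle $C_e$ as a $\mathbb{Z}/2$-combination of the generators of $S$. Since $e$ lies on some sampled path $P_k$, splicing $P_k$ at the endpoints of $e$ with tree-paths back to $v_i$ and forward to $v_j$ — themselves reconstructible from other sampled paths — would express $C_e$ inside $S$ after cancellation. Because the fundamental cycles generate $Z(U)$, this forces every cycle of $U$ to be balanced, contradicting the hypothesis and completing the contrapositive.

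The main obstacle is precisely the step $S=Z(U)$: when the sampled family is meager, the symmetric differences need not span every fundamental cycle, and one can construct small examples where two paths share many edges, their union harbors an unbalanced cycle, yet both paths carry the same sign. To patch this, the plan is to either interpret ``multiple paths from $v_i$ to $v_j$'' as the full family of simple $v_i$-to-$v_j$ paths supported on the edges of $U$ — in which case the switching characterization of signed-graph balance makes the cycle-space equality automatic — or to impose a mild saturation assumption on the random walks guaranteeing that every edge of $U$ participates in two distinct sampled paths whose symmetric difference covers it. Either strengthening suffices to close the argument and recover the conclusion that $v_j$ is placed in both $\samp{i}$ and $\samn{i}$.
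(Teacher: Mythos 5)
You are proving the converse of what the paper actually establishes, and the direction you chose is false as stated. Despite the word ``if'' in the theorem, the paper's proof runs from conflict to unbalanced cycle: it assumes $v_j\in\samp{i}\cap\samn{i}$, extracts two $v_i$--$v_j$ paths with opposite estimated signs, and argues that the cycle they create cannot be balanced. You instead try to show that an unbalanced cycle in the union of sampled paths forces a conflict, and the obstacle you hit at the step $S=Z(U)$ is not a technicality but an outright counterexample to that direction. Take $P_1=v_i,x,y,v_j$ and $P_2=v_i,x',x,y,y',v_j$ sharing only the edge $xy$; their symmetric difference is two edge-disjoint triangles, and if each triangle carries exactly one negative edge then both are unbalanced while $\sigma(P_1)=\sigma(P_2)=+1$, so $v_j$ lands only in $\samp{i}$ and no conflict occurs even though the union contains unbalanced cycles. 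Your proposed patches (taking \emph{all} simple $v_i$--$v_j$ paths in $U$, or a saturation hypothesis on the walks) change the theorem rather than prove it.

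The machinery you set up, however, proves the intended direction immediately, and more cleanly than the paper does. If there is a conflict, the two witnessing paths satisfy $\sigma(P_a)\sigma(P_b)=-1$; their symmetric difference is an even subgraph, hence an edge-disjoint union of cycles, and since shared edges cancel in squared pairs the product of the signs of those cycles equals $\sigma(P_a)\sigma(P_b)=-1$. Therefore at least one constituent cycle has sign $-1$, i.e.\ is unbalanced. No spanning claim $S=Z(U)$ is needed, because you only need the sign homomorphism to take the value $-1$ on a single element of the cycle space, not the value $+1$ on all of it. (This argument also repairs a small gap in the paper's own proof, which implicitly assumes the two conflicting paths differ in a single detour, so that their symmetric difference is one cycle with two distinct $v_x$--$v_y$ arcs; when the paths diverge and reconverge several times, one must pass through the decomposition into edge-disjoint cycles exactly as above.)
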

\begin{proof}
(By contradiction.) Suppose there is a \textit{conflict} for node $v_i$ where $\samp{i}$ and $\samn{i}$ both contain node $v_j$. Since there are at least two distinct $v_i$-$v_j$ paths because of the conflict, the network contains a cycle $c$ (ignoring the direction for directed networks). Now it is evident that the common edges of both paths are not responsible for the conflict since they occur in both paths. Suppose the cycle has two distinct $x$-$y$ paths. Now if cycle $c$ is balanced there will be an even number of negative edges which will be distributed between the distinct  $v_x$-$v_y$ paths in $c$. The distribution can occur in two ways: either both paths will have an odd number of negative edges or an even number of negative edges. In both cases the estimated sign between the $v_x$-$v_y$ paths will be the same. However, this is a contradiction because the final estimated sign of two $v_i$-$v_j$ paths are different and the signs between the common path are same, so thesigns between the $v_x$-$v_y$ paths must be different. Therefore, cycle $c$ cannot be balanced and hence contains an odd number of negative edges. Thus we have identified at least one unbalanced cycle.
\end{proof}
\subsubsection{Targeted edge sampling during optimization:}\label{subsec:samp}
Now after constructing the cache $\sam{i}=\samp{i}\bigcup \samn{i}$ for each node $v_i$, we can apply the targeted sampling approach
for each node. Here our  goal is to extend the objective of negative sampling from
classical word2vec approaches~\cite{w2v3}. In traditional negative sampling, a
random word-context pair is negatively sampled for each observed word-context pair. In a signed network both positive and negative edges are present, and thus  we aim to conduct both types of
sampling while sampling an edge observing its sign. Therefore when
sampling a positive (negative) edge $e_{ij}$, we aim to sample multiple 
negative (positive) nodes from $\samn{i}$ ($\samp{i}$). Therefore the objective function for each edge becomes (taking $\log$):
\begin{equation}\label{eqn:objf}
\scriptsize
\mathcal{O}_{ij} = \log[\sigma(s_{ij} (\con{j}^T \cdot \emb{i}))] + \sum_{c=1}^\mathcal{N}E_{v_n \sim \tau(s_{ij})} \log[\sigma(\tilde{s}_{in} (\con{n}^T\cdot \emb{i}))]
\end{equation}
Here $\mathcal{N}$ is the number of targeted node examples per edge and $\tau$ is a function which selects from $\samp{i}$ or $\samn{i}$ based on the sign $s_{ij}$. $\tau$ selects from $\samp{i}$ ($\samn{i}$) if $s_{ij}<0$ ($s_{ij}>0$). 

\begin{figure}[!t]
  \centering
  \begin{tabular}{@{}c@{}c@{}}
  \includegraphics[width=0.25\textwidth]{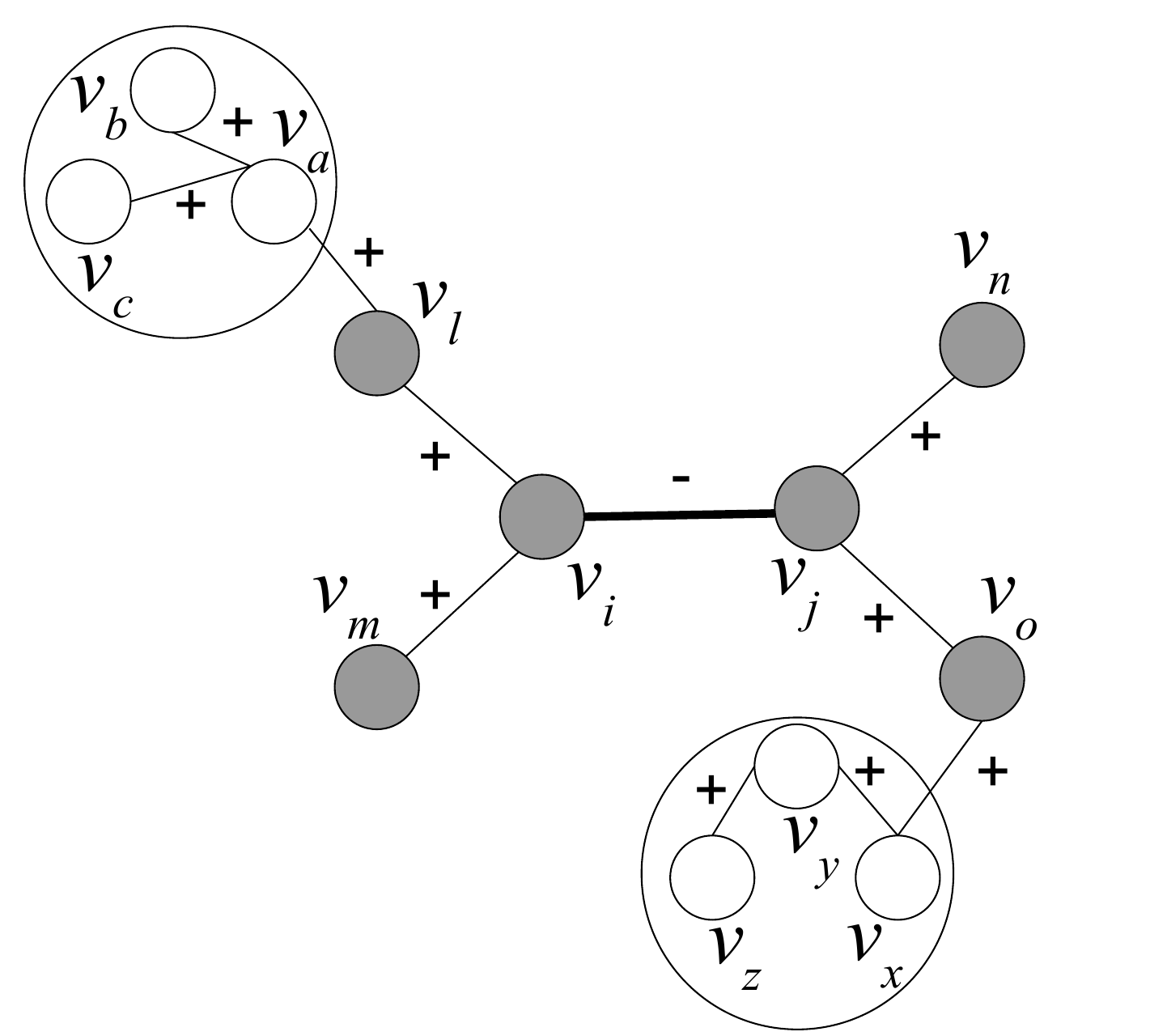} & 
  \includegraphics[width=0.25\textwidth]{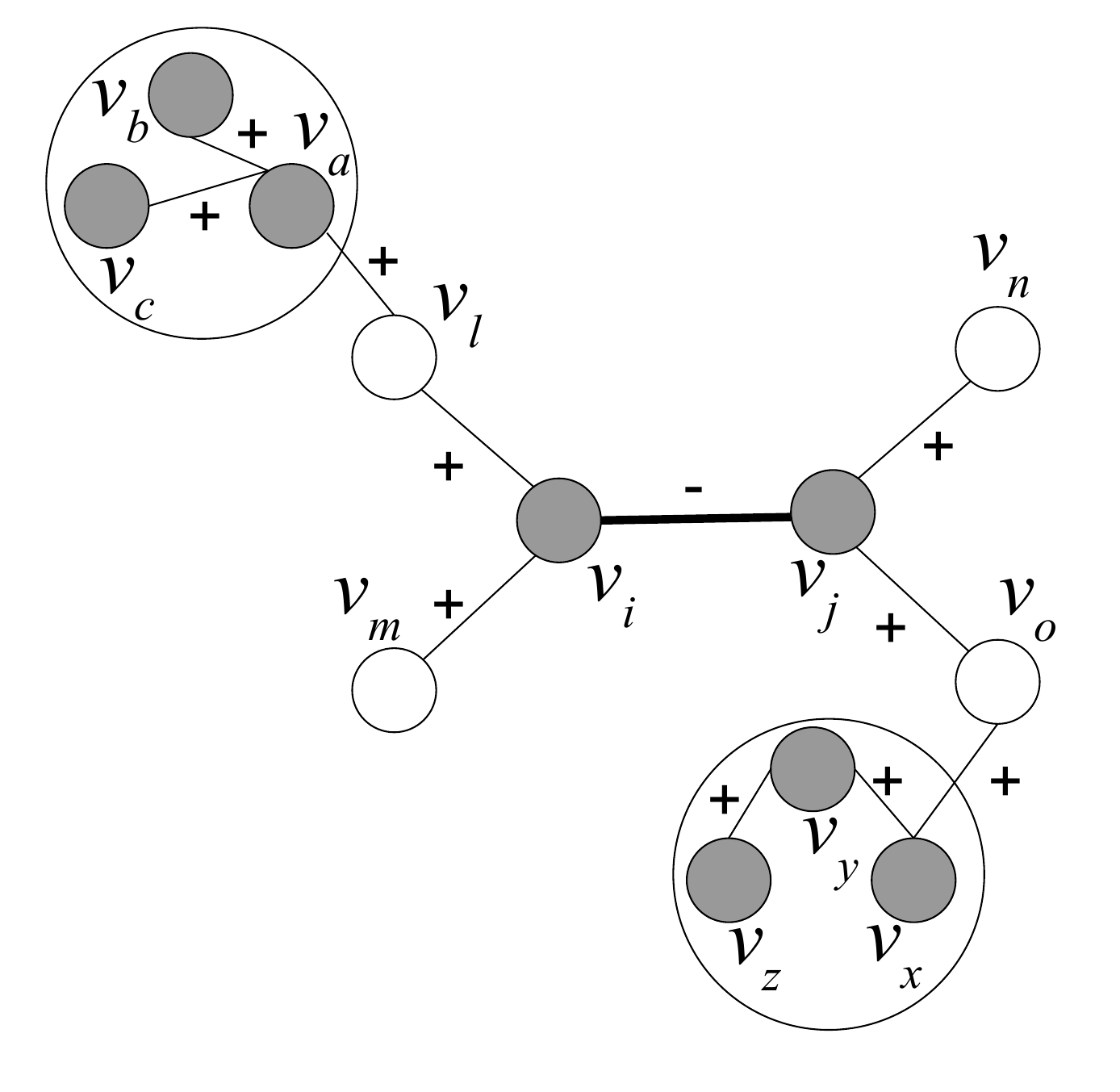} \\
  (a) \sine{} & (b) \seine{}
  \end{tabular}
  \caption{\textit{A comparative scenario depicting the optimization process inherent
in both SiNE (a) and \seine{} (b). The shaded vertices represent the nodes both methods will consider while optimizing the edge $e_{ij}$. We can see the \sine{} only considers the immediate neighbors because it optimizes edges in $2$-hop paths having opposite signs. On the other hand, \seine{} considers higher order neighbors ($v_a,v_b,v_c,v_x,v_y,v_z$) for targeted node sampling.}}
 \label{fig:comp_node}
\end{figure}

The benefit of targeted node sampling in terms of global balance considerations
across the entire network  is shown in Fig.~\ref{fig:comp_node}. Here we compare how our proposed approach
\seine{} and \sine{}~\cite{sine} maintain structural balance. For simplicity suppose only edge $e_{ij}$ has a negative sign. Now \sine{} only optimizes w.r.t.
pairs of edges in $2$-hop paths each having different signs. Therefore optimizing the edge $e_{ij}$ involves only the immediate neighbors of node $v_i$ and $v_j$, 
i.e. $v_l,v_m,v_n,v_o$ (Fig. \ref{fig:comp_node} (a)). However \seine{} skips the immediate neighbors while it uses higher order neighbors (i.e., $v_a,v_b,v_c,v_x,v_y,v_z$). Note
that \seine{} actually uses immediate neighbors as separate examples (i.e edge $e_{il}, e_{im}$ etc.). In this manner \seine{} covers more nodes to optimize the embedding space than \sine{}. 
\begin{algorithm}
\caption{The \seine{} algorithm}
\label{alg:main}
\begin{algorithmic}[1]
\REQUIRE (Graph $G=(V,E)$, embedding size $K$, walks per node $r$, walk length $l$, total number of samples $s$, initial learning rate $\gamma$)
\ENSURE $\femb{k}\in \mathbb{R}^K, \forall v_k\in V$
\FORALL{$v_n\in V$}
	\FOR{$i=1$ to $r$}
		\STATE $\omega_{ni}=RandomWalk(G,v_n,l)$
	\ENDFOR 
\ENDFOR
\FORALL{$v_n\in V$}
	\FOR{$i=1$ \TO $r$}
		\FOR{each $v_k\in \omega_{ni}$}
			\STATE Estimate relation between $v_k$ and $v_n$ using Eqn. \ref{eqn:sign}
			\STATE Add $v_k$ to either $\samp{n}$ or $\samn{n}$ based on the relation
		\ENDFOR
	\ENDFOR
	\STATE resolve \textit{conflict} for node $v_n$
\ENDFOR
\REPEAT
\FOR{each mini-batch of edges}
	\STATE Sample an edge using edge sampling method 
	\STATE Optimize the objective function in Eqn. \ref{eqn:objf}.
\ENDFOR
\STATE Update learning rate $\gamma$
\UNTIL{in total $s$ samples are processed}
\end{algorithmic}
\end{algorithm}
\subsection{Discussion}\label{sec:dis}
We now discuss several computational aspects of the \seine{} model.

\textbf{Optimization:} We adopt the asynchronous stochastic gradient method (ASGD) \cite{hogwild} to optimize the objective function $\mathcal{O}_{ij}$ for each edge $e_{ij}$. The ASGD method randomly selects a mini batch of randomly selected edges and update emebeddings at each step. Now for each edge $e_{ij}$ the gradient of the objective function will have a constant coefficient $r_{ij}$ (i.e. $|w_{ij}|$) . Now if the absolute weights of the edges have a high variance, it is hard to find a good learning rate. For example if we set the learning rate very small it would work well for large weighted edge but for small weighted edge the overall learning will be very inadequate resulting in poor performance. On the other hand, a large learning rate will work well for edges with smaller weights but for edges with large weight the gradient will be out of limits. To remedy this we adopt the \textit{edge sampling} used in \cite{line}. In edge sampling all the weighted edges treated as binary edges with non-negative weights (i.e. absolute value of edges $r_{ij}$). Now the edges are sampled during optimization according to the multinomial distribution constructed from the absolute value of the edge weights. For example suppose all the absolute values of the edges are stored in the set $R=\{ r_1, r_2, \cdots r_{|E|} \}$. Now during the optimization each edge is sampled according to the multinomial distribution constructed from $R$. However, each sampling from $R$ would take $O(E)$ time, which is computationally expensive for large network. To remedy this we use the alias table approach proposed in~\cite{alias}. An alias table takes $O(1)$ time while continuously drawing samples from a constant discrete multinomial distribution. 

\textbf{Threshold value for $\sam{i}$:} Theoretically there should not be any bound on the size of $\samp{i}$ and $\samn{i}$. However empirical analysis shows limiting the size of $\samp{i}$ to very small values (i.e $5-7$) actually gives better results. 

\textbf{$\sam{i}$ for low degree nodes:} Nodes with a low degree may not have an adequate number of samples for $\samp{i}$ and $\samn{i}$ from the random walks. This is why it is possible to exchange the nodes within $\samp{i}$ and $\samn{i}$. For example if node $v_x\in \samp{i}$, one can add node $v_i$ to $\samp{x}$. 

\textbf{Embedding for new vertices:} \seine{} can learn embedding for newly arriving vertices. Since this is a network model, we can assume that advent of new vertices means we know its connection with existing nodes (i.e., neighbors). Suppose the new vertex is $v_n$ and its set of neighbors is $\mathbf{N}_n$. We just have to construct $\sam{n}$ and optimize the newly formed edges using the same optimization function stated in Eqn.~\ref{eqn:objf} to obtain the embedding of node $n$.

\textbf{Complexity:} Constructing $\sam{i}$ for node $v_i$ takes $O(rl)$ time where $l$ is the length of random walk and $r$ is the number of walk for each node. Since $rl\ll |V|$, the total cache construction actually takes very little time w.r.t. vertex size. Moreover conflict resolution only takes place for very rare instances where the length of the shortest path is at most $l$. This cost is thus negligible compared to random walk and cache construction time. Now, for optimizing each edge along with the node sampling take $O(K(\mathcal{N}+1))$, where $K$ is the size of embedding space and $\mathcal{N}$ is the size of node sampling. The total complexity of optimization then become $O(K(\mathcal{N}+1)|E|)$, where $E$ is the set of edges. Therefore the overall complexity becomes $O(rl|V| + K(\mathcal{N}+1)|E|)$. A pseudocode of \seine{} is shown in Algorithm \ref{alg:main}. \seine{} is available at: https://github.com/raihan2108/signet.

\section{Experiments}\label{sec:exp}
\subsubsection{Experimental Setup:} \label{sec:setup}
We compare our algorithm against both the state-of-the-art method proposed for signed and unsigned network embedding. The description of the methods are below:
\begin{itemize}[leftmargin=0cm,itemindent=0.5cm,labelwidth=\itemindent,labelsep=0cm,align=left,nosep]
\item node2vec~\cite{node2vec}: This method, not
specific to signed networks, computes embeddings
by optimizing the neighborhood structure using informed random walks.
\item SNE~\cite{sne}: This method computes the embedding using a log bilinear model; however it does not exploit any specific theory of signed networks.
\item SiNE~\cite{sine}: This method uses a multi-layer neural network 
to learn the embedding by optimizing an objective function satisfying 
structural balance theory. \sine{} only concentrates on the immediate 
neighborhood of vertices rather than on the global balance structure. 
\item \seine{}-NS: This method is similar to our proposed method \seine{} except it uses conventional negative sampling instead of our proposed targeted node sampling.
\item \seine{}: This is our proposed \seine{} method which uses random walks to construct a cache of positive and negative examples for targeted node sampling.
\end{itemize}
We skip hand crafted feature generation method for link prediction like \cite{posneg} because they can not be applied in node label prediction and already shows inferior performance compared to SiNE.

In the discussion below, we focus on five real world signed network datasets (see Table \ref{tab:stat}). Out of these five, two datasets are from 
social network platforms---Epinions and Slashdot---courtesy the
Stanford Network Analysis Project (SNAP). The details on how the signed 
edges are defined are available at the project website~\footnote{http://snap.stanford.edu/}. The third dataset is a voting records of Wikipedia adminship election (Wiki), also from SNAP. 
The fourth dataset we study is an adjective network (ADJNet) constructed from the synonyms and antonyms collected from Wordnet database. Label information about whether the adjective is positive or negative comes from SentiWordNet~\footnote{http://sentiwordnet.isti.cnr.it/}. The last dataset  is a citation network we constructed from written case opinions of the Supreme Court of the United States (SCOTUS). We expand the notion of SCOTUS citation network~\cite{scotus} into a signed network. 

To understand this network, it is important to note that
there are typically two main parts to a SCOTUS case opinion. The first part contains the majority and any optional concurring opinions where justices cite previously argued cases to defend their position. The second part (optional, does not exist in a unanimous decision) consists of dissenting opinions containing arguments opposing the decision of the majority opinion. In our modeling, nodes denote cases (not opinions). The citation of one case's majority opinion to another case will form a positive relationship, and citations from dissenting opinions will form a negative relationship. We collected all written options from the inception of SCOTUS to construct the citation network. Moreover, we also collected the decision direction of supreme court cases from The Supreme Court  Database~\footnote{http://scdb.wustl.edu/}. This decision direction denotes whether the decision is conservative or liberal, information that we will use for validation. We also use 3 synthetic datasets in \ref{sec:node}, details are in the corresponding section.

Unless otherwise stated, for directed networks we set $|\emb{i}|=|\con{i}|=\frac{K}{2}=20$ for both \seine{}-NS and \seine{}; therefore $|\femb{i}|=40$. For a fair comparison, the final embedding dimension for others methods is set to $40$. For undirected network (ADJNet) $|\femb{i}|=40$ for all the methods. We also set the total number of samples (examples) to 100 million, $\mathcal{N}=5$, $l=50$ and $r=1$ for \seine{}-NS and \seine{}. For all the other parameters for node2vec, SNE and \sine{} we use the settings recommended in their respective papers.

\begin{table}[!t]
\centering
\scriptsize
\caption{\textit{Statistics of the datasets used for performance evaluation. In social network datasets negative edges are underrepresented, however in ADJNet and SCOTUS they are well represented. ADJNet and SCOTUS also contain binary labels.}}
\label{tab:stat}
\begin{tabular}{l|l|l|l|l|l}
\toprule
Statistics        & Epinions & Slashdot & Wiki & ADJNet & SCOTUS   \\ \midrule
total nodes       & 131828   & 82144 & 7220  & 4579   & 28305     \\
positive edges    & 717667   & 425072 & 83717 & 10708  & 43781   \\
negative edges    & 123705   & 124130 & 28422 & 7044   & 42102    \\
total edges       & 841372   & 549202 & 112139 & 17752  & 85883   \\
\% negative edges & 14.703   & 22.602 & 25.345 & 39.680 & 49.023 \\ 
direction & directed & directed & directed & undirected & directed \\ \bottomrule
\end{tabular}
\end{table}
\begin{figure}[!hbtp]
\centering
\includegraphics[scale=0.07]{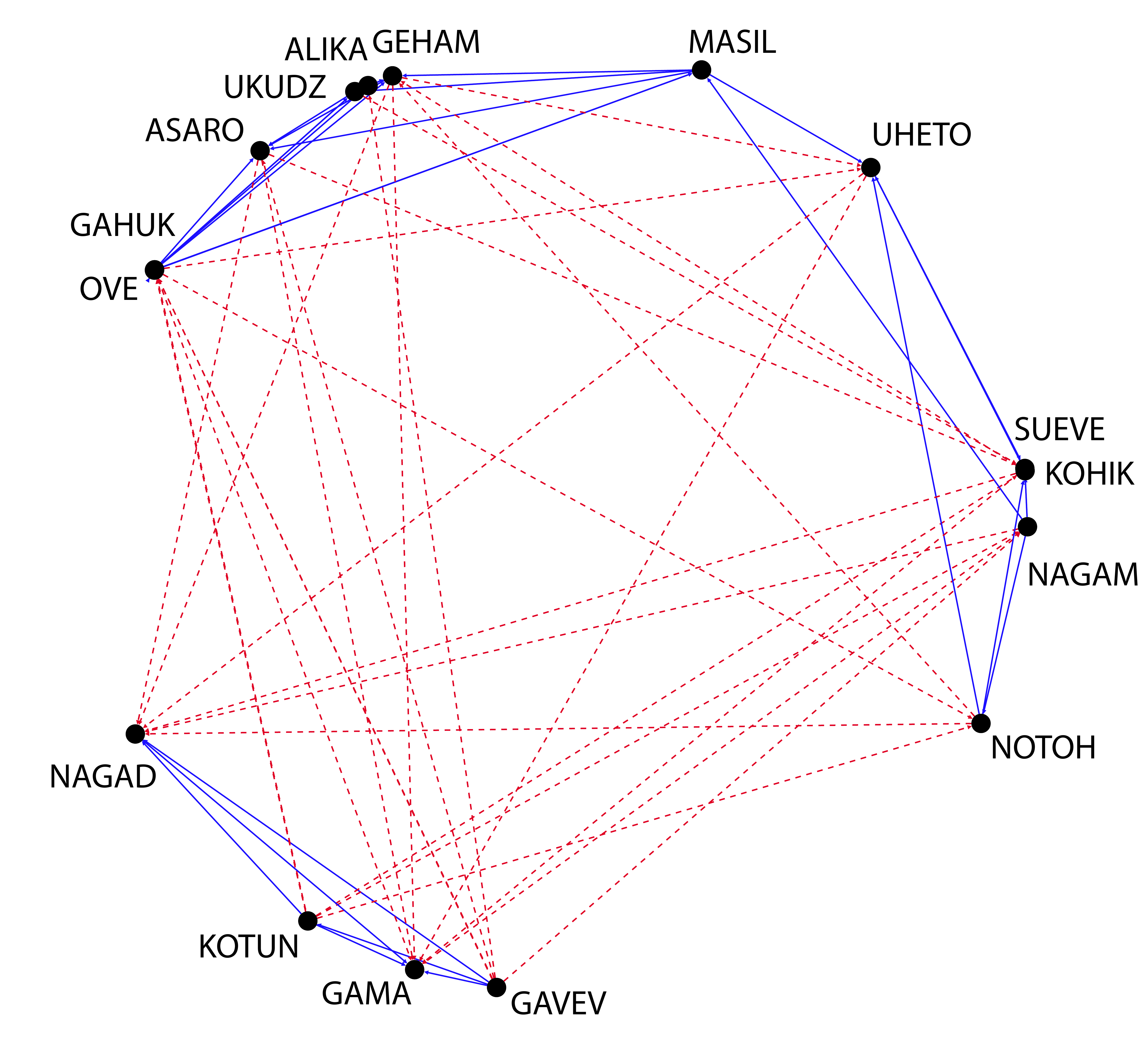}
\caption{\textit{2-dimensional embedding of alliances among sixteen tribes of New Guinea. Alliance (hostility) between the tribes is shown in solid blue (dashed red) edges. We can see that edges representing alliance are comparatively shorter than the edges represents hostility.}}
\label{fig:tribe}
\end{figure}
\subsubsection{Are Embeddings Interpretable?}\label{sec:embed-int}
For visual depiction of embeddings,
we first utilize a small dataset denoting relations between sixteen tribes in
Central Highlands of New Guinea~\cite{tribe}. This is a signed network showing the alliance and hostility between the tribes. We learned the embeddings in two dimensional space as an undirected network as shown in Fig. \ref{fig:tribe}. We can see that in general solid blue edges (alliance) are shorter than the dashed red edges (hostility) confirming that allied tribes are closer than the hostile tribes. One notable point is tribe \textit{MASIL} has no enemies and often works as a peace negotiator between the tribes. We can see that
\textit{MASIL} positions nicely between two groups of tribes \{\textit{OVE, GAHUK, ASARO, UKUDZ, ALIKA, GEHAM}\} and \{\textit{UHETO, SEUVE, NAGAM, KOHIK, NOTOH}\}. The tribes within these two groups are only allied to each other and \textit{MASIL} but they are hostile to other tribes belonging to different groups. This actually justifies the position of \textit{MASIL}. As reported in \cite{antho} there is another such group which consists of the tribes \textit{NAGAD, KOTUN, GAMA, GAVEV}; notice that they position themselves in the lower left corner far away from other two groups. Therefore the embedding space learned by \seine{} clearly depicts alliances and relationships among the tribes.

\subsubsection{Does the embedding space learned by \seine{} support structural balance theory?}\label{sec:ana}
Here we present our analysis on whether the embedding space learned by \seine{} follows the principles of structural balance theory. 
We calculate the mean Euclidean distance between
representations of nodes connected by positive versus negative
edges, as well as their standard deviations 
(see Table~\ref{tab:dist}). The lower value of positive edges suggests positively connected nodes stay closer together than the negatively connected nodes indicating that \seine{} has successfully learned the embedding using the principles of structural balance theory. Moreover, the ratio of average distance between the positive and negative edges is at most $67\%$ over all the datasets suggesting that \seine{} grasps the principles very effectively. 

\begin{table}[!t]
\centering
\scriptsize
\caption{\textit{Average Euclidean distance between node representations connected by
positive edges versus negative edges with std. deviation. We can see that the avg. distance between positive edge is significantly lower than negative edges indicating that \seine{} preserves the conditions of structural balance theory.}}
\label{tab:dist}
\begin{tabular}{@{}l|l|l|l|l|l}
\toprule
\begin{tabular}[c]{@{}l@{}}Type of \\ edges\end{tabular} & Epinions    & Slashdot    & Wiki        & SCOTUS       & ADJNet      \\ \midrule
positive                                                 & 0.86 (0.37) & 0.98 (0.31) & 1.06 (0.27) & 0.84 (0.25) & 0.71 (0.16) \\
negative                                                 & 1.64 (0.23) & 1.60 (0.19) & 1.56 (0.19) & 1.64 (0.21) & 1.77 (0.08) \\ \midrule
ratio                                                    & 0.524       & 0.613       & 0.679       & 0.512        & 0.401       \\ \bottomrule
\end{tabular}
\end{table}
\subsubsection{Are representations learned by \seine{} effective at edge label prediction?}\label{sec:edge}
We now explore the
utility of \seine{} for edge label
prediction. For all the datasets we sample $50\%$ of the edges as a
training set to learn the node embedding. Then we train a logistic regression classifier using the embedding as features and the sign of the edges as label. This classifier is used to predict the sign of the remaining $50\%$ of the edges. Since edges involve two nodes we explore several scores to compute the features for edges from the node embedding. They are described below:
\begin{enumerate}[leftmargin=0cm,itemindent=0.5cm,labelwidth=\itemindent,labelsep=0cm,align=left,nosep]
\item \textbf{Concatenation} (concat): $\feat{ij}=\femb{i} \oplus \femb{j}$
\item \textbf{Average} (avg): $\feat{ij}=\frac{\femb{i} + \femb{j}}{2}$
\item \textbf{Hadamard} (had): $\feat{ij}=\femb{i} * \femb{j}$
\item \textbf{$\mathcal{L}_1$}: $\feat{ij}=|\femb{i}-\femb{j}|$
\item \textbf{$\mathcal{L}_2$}: $\feat{ij}=|\femb{i}-\femb{j}|^2$
\end{enumerate}

\begin{table}[!t]
\scriptsize 
\caption{\textit{Comparison of edge label prediction in all datasets. We show the macro F1 score for each feature scoring method. 
The best score across all the scoring method is shown in boldface. \seine{} outperforms node2vec, SNE, and \sine{} in every case. The results are statistically significant with $\mathit{p}<0.01$.}}
\centering
\label{tab:edge}
\begin{tabular}{p{0.6cm}|p{1.5cm}|>{\centering\arraybackslash}p{0.7cm}|>{\centering\arraybackslash}p{0.7cm}|>{\centering\arraybackslash}p{0.5cm}|>{\centering\arraybackslash}p{0.7cm}|>{\centering\arraybackslash}p{0.7cm}}
\hline
Eval.                      & Dataset    & Epinions & Slashdot & Wiki  & ADJNet & SCOTUS \\ \hline
\multirow{5}{*}{concat}    & node2vec        & 0.601    & 0.508    & 0.45  & 0.478  & 0.500  \\
                           & SNE             & 0.461    & 0.436    & 0.428 & 0.376  & 0.447  \\
                           & SiNE            & 0.460    & 0.436    & 0.427 & 0.401  & 0.378  \\
                           & \seine{}-NS   & 0.792    & 0.654    & 0.719 & 0.379  & 0.547  \\
                           & \seine{}      & \textbf{0.807}   & \textbf{0.716}   & \textbf{0.750} & 0.412  & 0.550  \\ \hline
\multirow{5}{*}{avg}       & node2vec        & 0.485    & 0.495   & 0.428 & 0.477  & 0.495  \\
                           & SNE             & 0.461    & 0.436    & 0.428 & 0.376  & 0.363  \\
                           & SiNE            & 0.460    & 0.436    & 0.427 & 0.388  & 0.378  \\
                           & \seine{}-NS   & 0.626    & 0.589    & 0.614 & 0.374  & 0.509  \\
                           & \seine{}      & 0.694    & 0.668    & 0.667 & 0.400  & 0.523  \\ \hline
\multirow{5}{*}{had}       & node2vec        & 0.469    & 0.455    & 0.428 & 0.43   & 0.492  \\
                           & SNE             & 0.461    & 0.436    & 0.428 & 0.376  & 0.336  \\
                           & SiNE            & 0.460    & 0.436    & 0.427 & 0.393  & 0.378  \\
                           & \seine{}-NS   & 0.666    & 0.554    & 0.508 & \textbf{0.795}  & 0.671  \\
                           & \seine{}      & 0.726    & 0.582    & 0.523 & 0.785  &  \textbf{0.815}  \\ \hline
\multirow{5}{*}{$\mathcal{L}_1$}        & node2vec        & 0.461    & 0.437    & 0.431 & 0.401  & 0.492  \\
                           & SNE             & 0.461    & 0.436    & 0.428 & 0.376  & 0.378  \\
                           & SiNE            & 0.460    & 0.436    & 0.427 & 0.378  & 0.378  \\
                           & \seine{}-NS   & 0.661    & 0.552    & 0.457 & 0.792  & 0.598  \\
                           & \seine{}      & 0.753    & 0.627    & 0.487 & 0.788  & 0.782  \\ \hline
\multirow{5}{*}{$\mathcal{L}_1$}        & node2vec        & 0.464    & 0.439    & 0.432 & 0.451  & 0.483  \\
                           & SNE             & 0.461    & 0.436    & 0.428 & 0.376  & 0.336  \\
                           & SiNE            & 0.460    & 0.436    & 0.427 & 0.378  & 0.378  \\
                           & \seine{}-NS   & 0.665    & 0.560    & 0.463 & \textbf{0.795} & 0.630  \\
                           & \seine{}      & 0.760    & 0.641    & 0.508 & 0.786  & 0.792  \\ \hline
\multicolumn{2}{l|}{gain over node2vec (\%)}      & 34.28    & 40.94    & 66.67 & 64.85  & 63.00     \\
\multicolumn{2}{l|}{gain over SNE (\%)}           & 75.05    & 64.22    & 75.23 & 109.57 & 82.33  \\
\multicolumn{2}{l|}{gain over SiNE (\%)}          & 75.43    & 64.22    & 75.64 & 96.51   & 115.61 \\
\multicolumn{2}{l|}{gain over \seine{}-NS (\%)} & 1.89     & 9.48     & 4.31  & -0.88  & 21.46  \\ \hline
\end{tabular}
\end{table}

Here $\mathbf{f}_{ij}$ is the feature vector of edge $e_{ij}$ and $\femb{i}$ is the embedding of node $v_i$. Except for the method of concatenation
(which has a feature vector dimension of $80$) other methods use $40$-dimensional vectors. Since the datasets are typically imbalanced we use the macro-F1 scores to evaluate our method. We repeat this process five times and report the average results (see
Table~\ref{tab:edge}). Some key observations from this table are
as follows: 
\begin{enumerate}[leftmargin=0cm,itemindent=0.5cm,labelwidth=\itemindent,labelsep=0cm,align=left,nosep]
\item \seine{}, not surprisingly, outperforms node2vec across all
datasets. For datasets that contain relatively
fewer negative edges (e.g., $14\%$ for Epinions and $22\%$ for
Slashdot), the improvements are modest (around $34$--$40\%$). For ADJNet and SCOTUS where the sign distribution is less skewed,
\seine{} outperforms node2vec by a huge margin ($64\%$ for ADJNet and $63\%$ for SCOTUS). Also for Wiki the gains are huge (around $66\%$) where $25\%$ of edges are negative.
\item \seine{} demonstrates a consistent advantage over
\sine{} and SNE, with gains ranging from 64--75$\%$ (for the social
network datasets) to 82--115$\%$ (for ADJNet and SCOTUS).
\item \seine{} also outperforms \seine{}-NS in almost all scenarios demonstrating the effectiveness of targeted node sampling over negative sampling.
\item Performance measures (across all scores and across all
algorithms) are comparatively better for Epinions over other datasets
because 
almost $83\%$ of the nodes in Epinions satisfy the structural balance 
condition~\cite{signbal}. As a result edge label prediction is comparatively
easier than in other datasets.
\item The feature scoring method has a noticeable impact w.r.t.
different
datasets. The Average and Concatenation methods subsidize differences whereas
the Hadamard, $\mathcal{L}$-$1$ and $\mathcal{L}$-$2$ methods
promote differences. To understand why this makes a difference, consider
networks like ADJNet and SCOTUS
where connected components denote strong polarities (e.g.,
denoting synonyms or justice leanings, respectively).
In such networks,
the Hadamard, $\mathcal{L}$-$1$ and $\mathcal{L}$-$2$ methods provide
more discriminatory features. However,
Epinions and Slashdot are relatively large datasets with
diversified communities and so
all these methods perform nearly comparably.
\end{enumerate}

\subsubsection{Are representations learned by \seine{} effective at node label prediction?}\label{sec:node}
For datasets like SCOTUS and ADJNet
(where nodes are annotated with labels),
we learn a logistic regression classifier to map
from node representations to corresponding labels (with a 50-50 training-test
split). We also repeat this five times and report the average. See Table~\ref{tab:node} for results.
As can be seen,
\seine{} consistently outperforms all the other
approaches.
In particular,
in the case of SCOTUS which is a citation network, some cases have a
huge number of citations (i.e. landmark cases) in both ideologies. 
Targeted node sampling, by adding such cases to either $\samp{i}$ or $\samn{i}$, situates the
embedding space close to the landmark cases if they are in $\samp{i}$ or away from them if they are in $\samn{i}$, thus supporting accurate node prediction.

\begin{table}[!t]
\centering
\small
\caption{\textit{Comparison of methods for node label prediction on real world datasets. 
\seine{} outperforms other methods in all datasets.}}
\label{tab:node}
\begin{tabular}{l|l|l|l}
\hline
\multicolumn{2}{l|}{Dataset Name}            & ADjNet  & SCOTUS  \\ \hline
\multirow{5}{*}{micro f1}   & node2vec       & 0.5284  & 0.5392  \\
                            & SNE            & 0.5480  & 0.5432  \\
                            & SiNE           & 0.6257  & 0.6131  \\
                            & \seine{}-NS  & 0.7292  & 0.8004  \\
                            & \seine{}     & \textbf{0.8380}  & \textbf{0.8419}  \\ \hline
\multicolumn{2}{l|}{gain over node2vec (\%)}      & 58.5920 & 56.1387 \\
\multicolumn{2}{l|}{gain over SNE (\%)}           & 52.9197 & 54.9890 \\
\multicolumn{2}{l|}{gain over SiNE (\%)}          & 33.9300 & 37.3185 \\
\multicolumn{2}{l|}{gain over \seine{}-NS (\%)} & 14.9205 & 5.1849  \\ \hline
\multirow{5}{*}{macro f1}   & node2vec       & 0.4605  & 0.4922  \\
                            & SNE            & 0.4540   & 0.4435  \\
                            & SiNE           & 0.5847  & 0.5696  \\
                            & \seine{}-NS  & 0.7261  & 0.7997  \\
                            & \seine{}     & \textbf{0.8374}  & \textbf{0.8415}  \\ \hline
\multicolumn{2}{l|}{gain over node2vec (\%)}      & 45.0084 & 41.5092 \\
\multicolumn{2}{l|}{gain over SNE (\%)}           & 84.4493 & 89.7407 \\
\multicolumn{2}{l|}{gain over SiNE (\%)}          & 43.2187 & 47.7353 \\
\multicolumn{2}{l|}{gain over \seine{}-NS (\%)} & 15.3285 & 5.2270  \\ \hline
\end{tabular}
\end{table}

\begin{figure*}[!htbp]
  \centering
  \begin{tabular}{@{}c@{}c@{}c@{}c@{}}
  \includegraphics[width=0.25\textwidth]{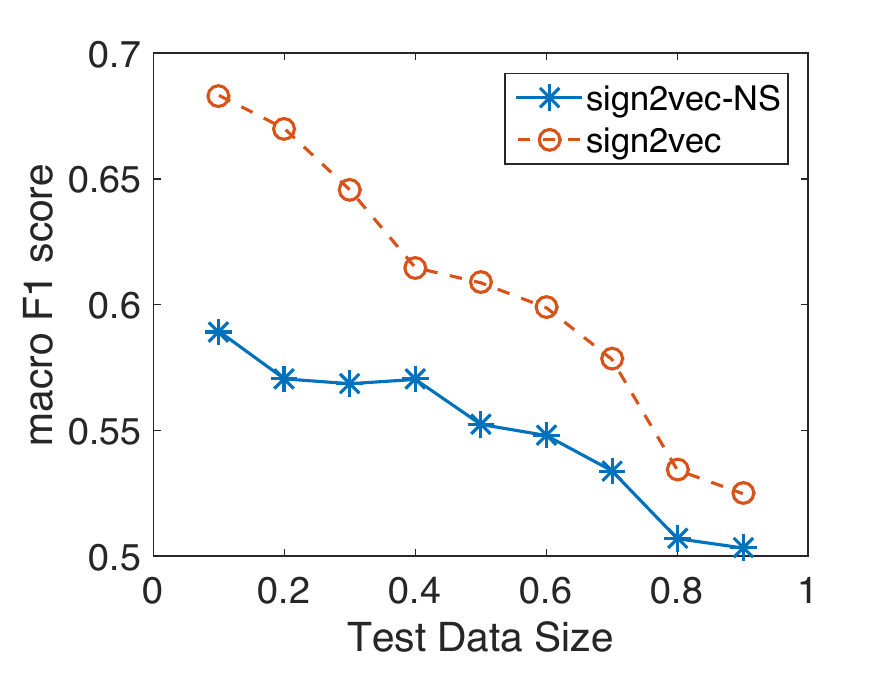} & 
  \includegraphics[width=0.25\textwidth]{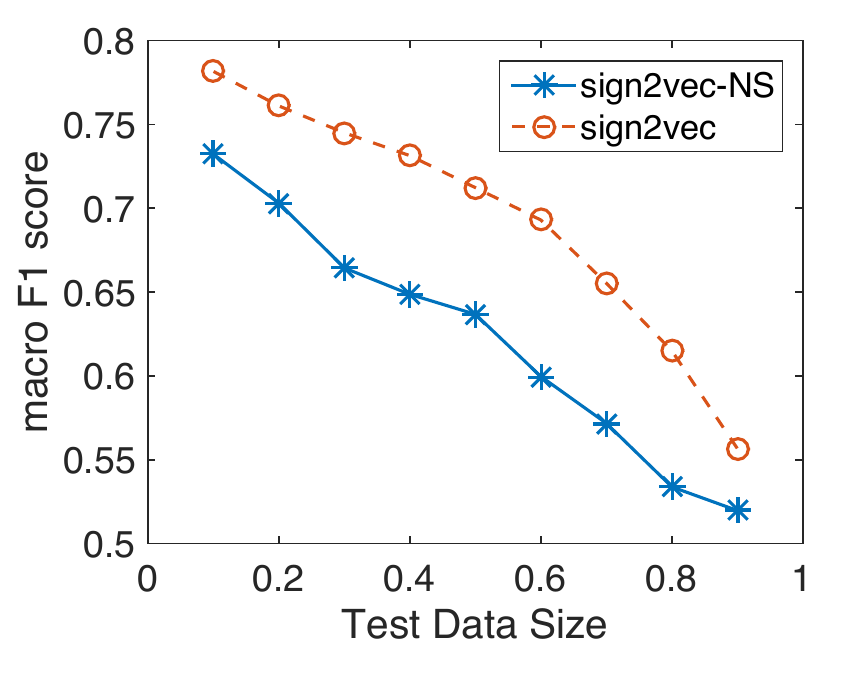} &
  \includegraphics[width=0.25\textwidth]{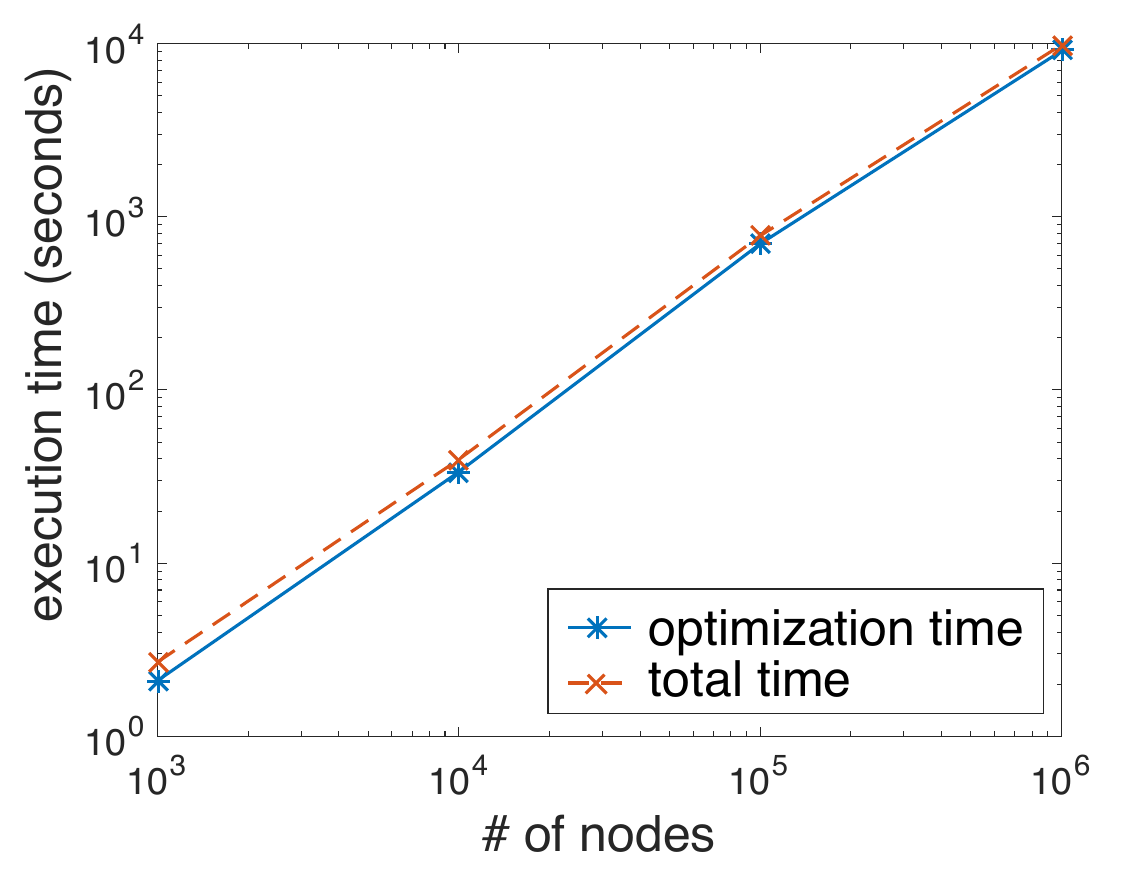} & 
  \includegraphics[width=0.25\textwidth]{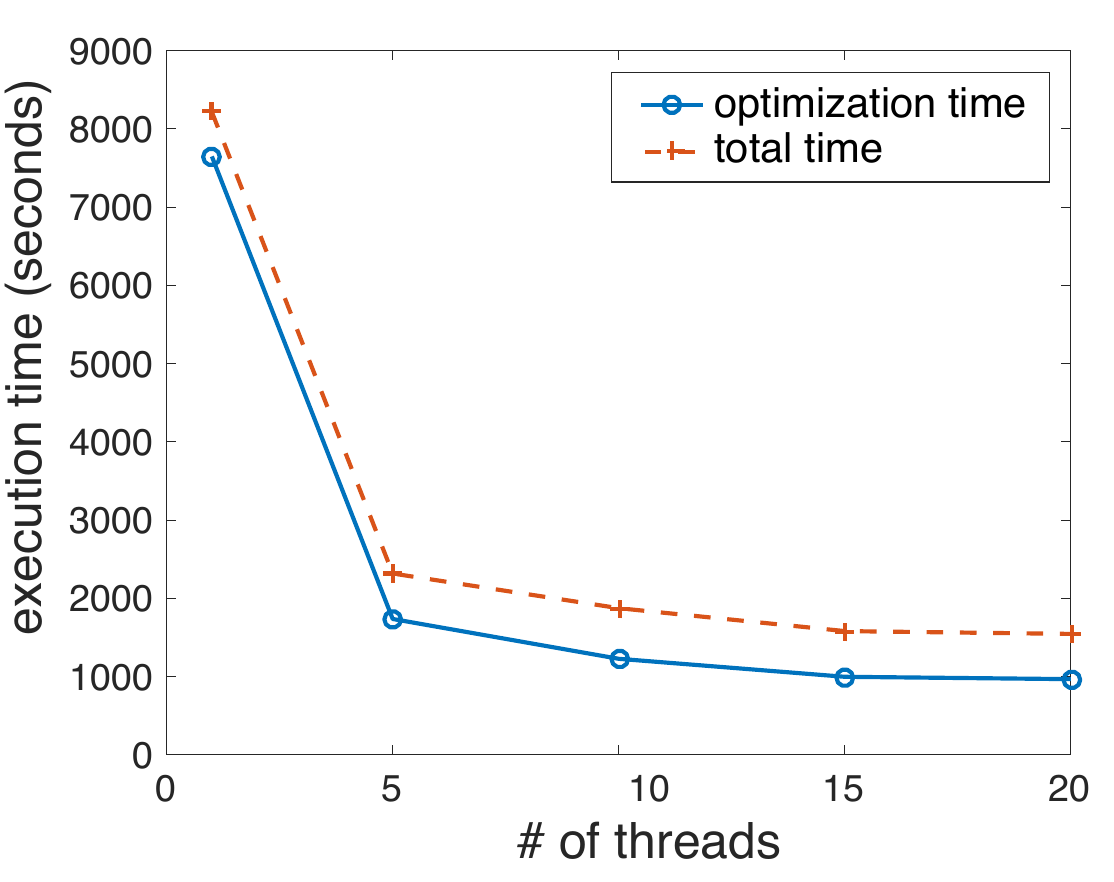} \\
   (a) & (b)  & (c) & (d)  \\
  \end{tabular}
  \caption{\textit{Macro F1 of ADJNet (a) and SCOTUS (b) datasets varying the percent of nodes used for training.  \seine{} outperforms \seine{}-NS in all cases. (c) and (d) show execution time of \seine{} varying the number of nodes and threads.}}
\label{fig:comp}
\end{figure*}
\begin{figure}[!b]
\centering
\includegraphics[width=0.48\textwidth]{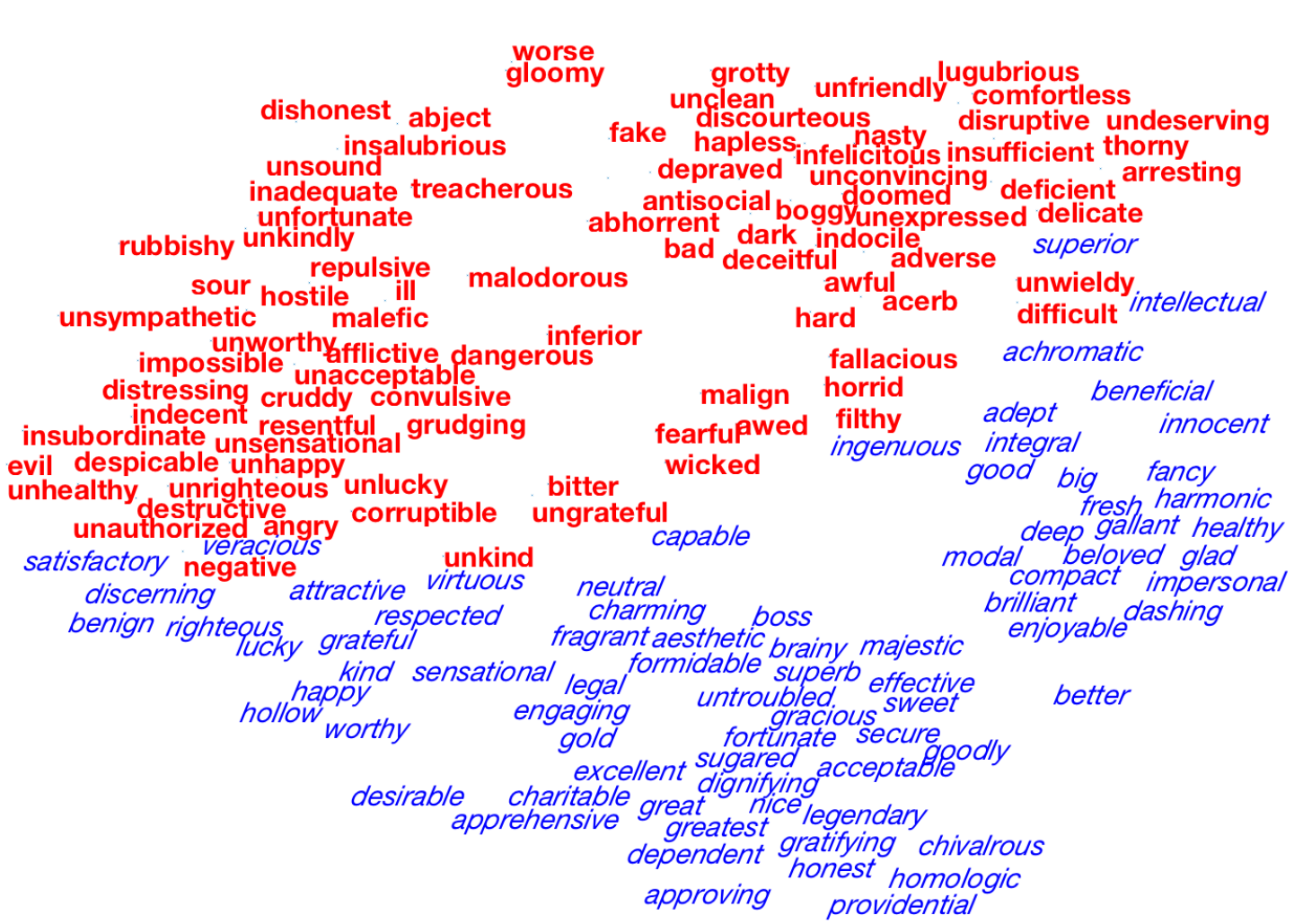}
\caption{Predicting the
polarity of adjectives in a subset of the
ADJNet dataset. Here red labeled/boldface words are negative while the 
blue labeled/slanted words are positive. (Many adjectives have been
removed to reduce clutter.) We use t-SNE to map the data into a 2D space.}
\label{fig:pa}
\end{figure}
The case of
Citizens United vs. Federal Election Commission (FEC),
one of the most controversial cases in recent times,
is instructive. In this case, Citizens United seeks an injection against the FEC to prevent the application of the
Bipartisan Campaign Reform Act (BCRA) so that a film on Hillary Clinton can be broadcasted. In a $5$-$4$ vote, the court decides in favor of Citizens United. 
In Fig.~\ref{fig:sc}, we depict
the BCRA related cases that cite Citizens United vs. Federal Election Commission in a $2$D projection.
The cases whose decisions support a conservative view are shown in red and 
the cases which support a liberal point of view are shown in blue.
Another two cases disputing the application of BCRA cite this case (shown in filled circles), viz.
Williams-Yulee vs The Florida Bar and McCutcheon vs FEC. In the first case the
court supports the liberal point-of-view (shown in blue) and cites the case negatively (shown in dashed line). Therefore, its embedding resides far away from the Citizens United case. In McCutcheon vs FEC, the court supports a
conservative point-of-view and decides in favor of McCutcheon. This case 
positively cites Citizens United case and its embedding is 
therefore positioned closer to it.

\subsubsection{Multiclass Node Classification} \label{sec:syn}
In section \ref{sec:node}, we show the results of node classification on real world dataset. 
One limitation of ADJNet and SCOTUS is nodes are tagged with binary data. Although binary labeling seems plausible in perfectly balanced signed network, it is possible to find the extension of this behavior in many social media analysis. For example, in an election media campaign, there could be multiple candidates, where supporters of one candidate speaks favorably for her candidate while speaks against other candidates. It is interesting to investigate how \seine{} performs in this circumstance.

Unfortunately, to the best of our knowledge there is no publicly available dataset to for this evaluation. That is why we use synthetic dataset to compare the performance.  We generate the networks based on the method proposed in \cite{bal-cut}. Given a total number of nodes $NV$, number of node labels $NG$ and sparsity score $\alpha$, we first create $NG$ subgraphs from $NV$ nodes having only positive edges within the subgraphs. The nodes of $i^{th}$ subgraphs are labeled as class $i$. Then we connect the subgraphs by only by negative edges. We also add random positive and negative edges as noise to make the networks more realistic. $\alpha$ controls the total number of edges. We create $3$ synthetic datasets each with $NV=50000$ nodes where $NG$ is set to $10$ (Syn $10$), $20$ (Syn $20$), $50$ (Syn $50$).

\begin{table}[!t]
\centering
\caption{Comparison of multiclass prediction on Synthetic Datasets. We apply one-vs-rest logistic regression classifier for the prediction. \seine{} outperforms all the other methods in all datasets.}
\label{table:syn}
\begin{tabular}{c|c|c|c|c}
\toprule
\begin{tabular}[c]{@{}c@{}}Performance \\ measure\end{tabular} & Algorithms    & Syn 10  & Syn 20  & Syn 50  \\ \midrule
\multirow{4}{*}{micro f1}                                      & node2vec      & 0.1112  & 0.0527  & 0.0195  \\
                                                              & SiNE          & 0.1105  & 0.0545  & 0.0197  \\
                                                              & \seine{}-NS & 0.1483  & 0.0848  & 0.0519  \\
                                                              & \seine{}    & \textbf{0.1723}  & \textbf{0.1104}  & \textbf{0.0716}  \\ \midrule
\multicolumn{2}{c|}{gain (\%) of \seine{}}                                       & 16.1834 & 30.1887 & 37.9576 \\ \midrule
\multirow{4}{*}{macro f1}                                      & node2vec      & 0.0967  & 0.0283  & 0.0032  \\
                                                              & SiNE          & 0.1083  & 0.0535  & 0.0187  \\
                                                              & \seine{}-NS & 0.1344  & 0.0747  & 0.0486  \\
                                                              & \seine{}    & \textbf{0.1695}  & \textbf{0.1084}  & \textbf{0.0704}  \\ \midrule
\multicolumn{2}{c|}{gain (\%) of \seine{}}                                       & 26.1161 & 45.1138 & 44.8560 \\ \bottomrule
\end{tabular}
\end{table}

We train a one-vs-rest logistic regression classifier for the prediction with a 50-50 training-test split. The result is shown in Table \ref{table:syn}. We can see that, \seine{} not surprisingly outperforms other methods with considerable margin. One of the interesting points is since in this dataset multiple oppositive  groups are present, considering this densely group behavior can provide better node sampling than random walk. This intend to explore this idea in the future.

\subsubsection{How much more effective is our sampling strategy in the presence of partial information?}\label{sec:partial-info}
To evaluate the effectiveness of our targeted node sampling versus negative sampling,
we remove all outgoing edges of a certain percent of randomly selected nodes 
(test nodes), learn an embedding, and then
aim to predict the labels of the test nodes. We show the macro F1 scores 
for ADJNet (treating it as directed) and SCOTUS in Fig.~\ref{fig:comp} (a) and Fig.~\ref{fig:comp} (b). As seen here,
\seine{} consistently
outperforms \seine{}-NS. 
Withholding the outgoing edges of test nodes implies
that both methods will miss the same edge information in
learning the embedding. However due to targeted node sampling many of these test nodes will be added to $\samp{i}$ or $\samn{i}$ in \seine{} (recall only the outgoing edges are removed, but not incoming edges). Because of this property,
\seine{} will be able to make an informed choice while optimizing the embedding space. 
\begin{figure}[!b]
\centering
\includegraphics[scale=0.4]{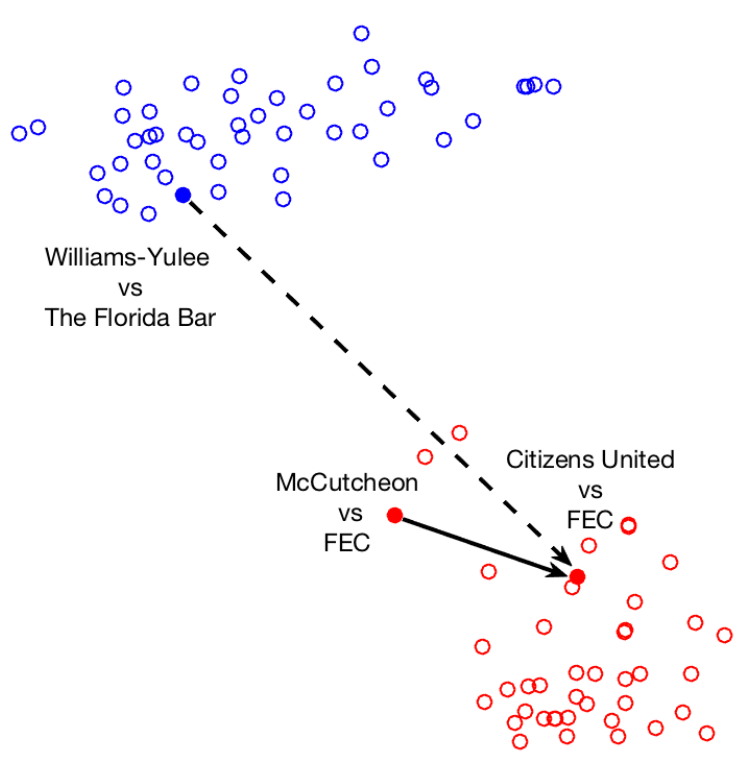}
\caption{\textit{Several conservatively and liberally disputed cases including Bipartisan Campaign Reform Act (BCRA) related cases that cite Citizens United vs. Federal Election Commission. Conservatively (liberally) disputed cases are shown in red (blue). Our discussed cases are shown in filled circles while other cases are shown in unfilled circles. Solid (dashed) edges represent positively (negatively) oriented relationships.}}
\label{fig:sc}
\end{figure}
\subsubsection{How scalable is \seine{} for large networks?}\label{sec:scale}
To assess the scalability of \seine{}, we learn embeddings for an Erdos-Renyi random network for upto one million nodes. The average degree for each node is set to 10 and the total number of samples is set to $100$ times the number of edges in the network. The size of the dimension is also set to 100 for this experiment. We make the network signed by randomly changing the sign of $20\%$ edges to negative. The optimization time and the total execution time (targeted node sampling + optimization) is compared in Fig. \ref{fig:comp} (c) for different vertex sizes. On a regular desktop, an unparallelized version of \seine{}
requires less than 3 hours to learn the embedding space for over 1 
million nodes. Moreover, the sampling times is negligible
compared to the optimization time (less than 15 minutes for 1 million nodes). This actually shows \seine{} is very scalable for real world networks.
Additionally, \seine{} uses an asynchronous stochastic gradient approach, so
it is trivially parallelizable and as Fig.~\ref{fig:comp}(d) shows,
we can obtain a $3.5$ fold improvement with just 5 threads, with diminishing
returns beyond that point.


\vspace{-1mm}
\section{Other Related Work}
\vspace{-1mm}
Work related to unsupervised feature learning for networks
have been discussed in the introduction. These ideas follow
the trend opened up
originally by
unsupervised feature learning in text. 
Skip-gram models proposed in~\cite{w2v1,w2v2,w2v3} learn a vector representation of words by optimizing a likelihood function. Skip-gram models are based on the principle that words in similar contexts generally have similar meanings~\cite{distStr} and
can be extended to learn feature representations for documents~\cite{doc2vec}, 
parts of speech \cite{sense2vec}, items in
collaborative filtering \cite{item2vec}.
Recently deep learning based models have been proposed for representation learning on graphs to perform the above mentioned prediction tasks in unsigned networks~\cite{lrbm,gateGraph,deepLink,stDeepEmb}. Although these models provide high accuracy by optimizing several layers of non-linear transformations, they are computationally expensive, requires a significant amount of training time and are only applicable to unsigned networks as opposed to our proposed method~\seine{}.  
\vspace{-1mm}

\vspace{-2mm}
\section{Conclusion}
\vspace{-1mm}
We have presented a scalable feature learning framework suitable
for signed networks. Using a targeted
node sampling for random walks, and leveraging structural
balance theory, we have shown how the embedding space learned
by \seine{} yields interpretable as well as effective representations.
Future work is aimed at experimenting with other theories of signed networks and extensions to networks with a heterogeneity of node
and edge tables.


\bibliographystyle{aaai}
\end{document}